\documentclass[10pt, twocolumn]{article}


\usepackage[numbers]{natbib}
\usepackage{amssymb}
\usepackage{latexsym}
\usepackage{amsmath}
\usepackage{amssymb}
\usepackage{amsthm}
\usepackage{xcolor}
\usepackage{comment}
\usepackage{subcaption}
\usepackage{tikz}
\usepackage{pgfplots}
\usepackage{algorithm}
\usepackage{algpseudocode}
\usepackage{cleveref}
\usepackage{url}
\usepackage{lipsum}
\usepackage{array}
\usepackage{comment}
\usepackage{subcaption}
\usepackage{etoolbox}
\usepackage{wasysym}
\usepackage{titling}
\usepackage[affil-it]{authblk}

\usepackage{url}
\usepackage{xcolor}
\definecolor{newcolor}{rgb}{.8,.349,.1}

\setlength{\droptitle}{-10em}


\newtheorem{lemma}{Lemma}
\newtheorem{theorem}{Theorem}

\DeclareMathOperator{\sign}{sign}

\DeclareMathOperator*{\cost}{cost}
\DeclareMathOperator*{\logistic}{logistic}
\newcommand\bias{\mathit{bias}}

\renewrobustcmd{\bfseries}{\fontseries{b}\selectfont}
\newrobustcmd{\bc}{\bfseries}

\begin{document}

\thispagestyle{empty}
                                                             

\title{Limitation of Capsule Networks}

\author{David Peer \qquad
Sebastian Stabinger \qquad
Antonio Rodr\'{i}guez-S\'{a}nchez\\
University of Innsbruck \\
Austria\\
{\tt\small https://iis.uibk.ac.at/}
}

\maketitle

\begin{abstract}
A recently proposed method in deep learning groups multiple neurons to capsules such that each capsule represents an object or part of an object. Routing algorithms route the output of capsules from lower-level layers to upper-level layers. In this paper, we prove that state-of-the-art routing procedures decrease the expressivity of capsule networks. More precisely, it is shown that \emph{EM-routing} and \emph{routing-by-agreement} prevent capsule networks from distinguishing inputs and their negative counterpart. Therefore, only symmetric functions can be expressed by capsule networks, and it can be concluded that they are not universal approximators. We also theoretically motivate and empirically show that this limitation affects the training of deep capsule networks negatively. Therefore, we present an incremental improvement for state-of-the-art routing algorithms that solves the aforementioned limitation and stabilizes the training of capsule networks.
\end{abstract}





\section{Introduction}\label{sec:introduction}
Capsules in capsule networks are groups of neurons that represent an object or a part of an object in a parse tree as introduced by \citet{transforming-autoencoders}. A capsule encodes \emph{instantiation parameters} of an object (e.g. its position, location, orientation, ...) through vector-based representations. The norm of this vector is called the \emph{activation} of a capsule and it encodes the probability of whether this object exists in the current input or not. Later, \citet{dynamic-routing} introduced the \emph{CapsNet} architecture, whose first layers construct primary capsules with ReLU convolutions. Those primary capsules are then routed to subsequent capsule layers with so-called routing algorithms. One such algorithm is the iterative \emph{routing-by-agreement (RBA)} algorithm from \citet{dynamic-routing}, which uses predictions from lower-level capsules to generate upper-level capsules. Later, \citet{em-routing} introduced an improved expectation-maximization based routing algorithm called \emph{EM-routing} that uses a pose-matrix to calculate equivariant upper-level capsules.

Although \citet{dynamic-routing} and \citet{em-routing} already successfully demonstrated this innovative idea to connect capsules of different layers with an iterative routing procedure, we were able to improve on both algorithms. More precisely, we found that routing-by-agreement and EM-routing prevent capsule networks from being universal approximators because they limit their representation capabilities to symmetric functions. Therefore, removing this limitation increases the expressivity and improves capsule networks as we will show in detail later in this paper. 

The existence of this limitation can not only be evaluated empirically, we also prove theoretically that for every input to a capsule layer a negative input exists that cannot be distinguished by capsules, even when this input represents a different class. Our theoretical investigation is independent of specific hyperparameters and its independent with respect to the used architecture. Therefore, the presented proof holds for the complete hyperparameter-space. We also use this proof to find specific problems that can not be solved with capsule networks using RBA or EM-routing. One such problem is the classification of the $\sign$ of scalar inputs, which we will show empirically later in this paper.

We also found that this limitation negatively affects the training of deep capsule networks for arbitrary datasets. This is especially problematic for deep networks since the first primary capsules are constructed with ReLU convolutions which means that all components of primary capsules are positive. Therefore, the negative of a vector and the vector itself can never both be an input to the first capsule layer and the limitation mentioned above will not apply to the first layer. After the first capsule layer, routing algorithms are used to generate subsequent capsules and the negative input can indeed occur in practice. Each subsequent layer increases the probability that negative inputs occur, and we hypothesize that those routing algorithms impact the training of deep architectures negatively. This hypothesis is extensively evaluated in the experimental section.

Additionally, we will present a solution to circumvent this limitation by utilizing a bias term, solving the aforementioned weakness and further improving state-of-the-art routing algorithms. Neural networks usually add such a term to the sum of all weighted inputs, but bias terms are not mentioned by \citet{dynamic-routing} for RBA in their paper. Some authors, such as \citet{capsule-rotation-traffic-sign}, explicitly state that for capsule networks no bias is used and that \emph{"[...] this design of the capsule allows more capabilities in representing its features"}. We agree that a bias term is not mentioned in the paper by \citet{dynamic-routing}, but prove that this limits the expressivity of capsule networks. On the other hand, for EM-routing \citet{em-routing} describes that they use two learned biases per capsule, namely $\beta_a$ and $\beta_u$. We prove in this paper that these terms are not enough to remove the aforementioned limitation and will present a solution to this problem.

The novel contributions of this paper are:
\begin{itemize}
    \item We proof that RBA as well as EM-routing decrease the expressivity of capsule networks, preventing them from being universal approximators when RBA or EM-routing are used.
    \item We shown that this limitation affects the training of deep capsule networks negatively.
    \item We adapt both algorithms to remove the aforementioned weakness and show that this adaption (1) stabilizes the training, (2) enables deeper capsule networks and, (3) increases the accuracy.
\end{itemize}

In the next section we will discuss related work. In \Cref{sec:proof} we will prove that a capsule network cannot distinguish inputs and the negation of those inputs if routing-by-agreement or EM-routing is used to connect different capsule layers. A bias term that we introduce in \Cref{sec:methods} removes this limitation. In the experimental \Cref{sec:experiments} we show that the $\sign$ function is one specific function that can not be expressed with capsule networks and we show that the proposed bias term removes this limitation. In the same section, we also show that a bias term stabilizes the training of capsule networks, enables deeper capsule networks to train successfully and increases the achievable accuracy. In \Cref{sec:conclusion} we discuss our findings.

\section{Related Work}\label{sec:related_work}
Capsules were introduced by \citet{transforming-autoencoders} who showed that such capsules can be trained by backpropagating the difference between the actual and the target outputs. To implement this idea, \citet{dynamic-routing} introduced the concept of routing algorithms. The goal of these algorithms is to activate upper-level capsules from lower-level capsules in such a way that during inference the activated capsules form a parse-tree that represents the decomposition of objects into their parts. The \emph{RBA} algorithm \citep{dynamic-routing} uses vectors to activate such upper-level capsules. The orientation of these vectors encodes the instantiation parameters whereas the activation of a capsule is encoded by the norm of the vector. The \emph{CapsNet} architecture showed the effectiveness of this vector-based algorithm. Later, \citet{em-routing} introduced an \emph{expectation maximization} based routing algorithm (\emph{EM-routing}) to route activations from a lower-level layer to upper-level layers. Compared to the CapsNet architecture, the authors use so-called matrix-based capsules rather than vector-based capsules. A \emph{matrix capsule} is composed of a $4 \times 4$ pose matrix together with a scalar variable that represents the activation of this capsule. An RBA based routing algorithm that uses scaled distance agreements rather than dot products to calculate general agreements is introduced by \citet{gamma-capsule-networks}. Another algorithm from \citet{inverted-dot-product-routing} uses inverted attentions to connect capsules of different layers. In this work, higher-level parents compete for the attention of lower-level child capsules instead of the other way round. \citet{stacked-capsule-autoencoders} proposes an unsupervised version of capsules to segment images into constituent parts and to organize discovered parts into objects. \citet{deepcaps} created some deeper capsule networks, resulting in state-of-the-art performance on MNIST, SVHN, smallNORB, and fashionMNIST. The routing strategy is formulated as an optimization problem that minimizes a KL regularization term between the current coupling and its state by \citet{optimization-view-rba}. The popularity of capsule networks has increased and they are used in a wide range of applications such as lung cancer screening \citep{lung-cancer-screening}, action detection from video \citep{video-capsnet}, or object classification in 3D point clouds \citep{3d-point-capsules}, to name a few.

\section{Limitation of capsule networks} \label{sec:proof}
In this section, we prove that a capsule network, using the routing-by-agreement as 
well as the EM-routing algorithm, cannot distinguish inputs and their negative inputs.

\subsection{Notation}
$I$ represents the number of lower-level capsules and $J$ the number of upper-level capsules. We use $i \in 1, ..., I$ to denote a single lower-level capsule and $j \in 1, ..., J$ to denote an upper-level capsules. We call the input $u$ for RBA the positive input and $-u$ the negative input vector. Note that the input for EM-routing is $M$. For any variable $v$ in the algorithm, we refer to the variable that is calculated using $u$ with $v^+$ and using $-u$ with $v^-$. With $v_{ij}$ we reference a variable that is used for lower-level capsule $i$ and upper-level capsule $j$. To reference component $h$ of $v \in \mathbb{R}^H$ we write $v^h$. For example with $v_{ij}^{-h}$ we access component $h$ of vector $v \in \mathbb{R}^H$ from lower-level capsule $i \in 1, ..., I$ to upper-level capsule $j \in 1, ..., J$ that is calculated for the negative input.

\subsection{Routing-by-agreement}

\begin{algorithm}[t]
  \caption{Details for the routing-by-agreement algorithm are described by \citet{dynamic-routing}.  \newline $\forall$ capsules $i$ in layer $l$ and $j$ in layer $l+1$ with $r$ routing iterations and predictions $\hat{u}_{j|i}$. The output of the algorithm are instantiation vectors $v_j$.} \label{algo:routing-by-agreement}
  \begin{algorithmic}[1]
    \Procedure{RoutingByAgreement}{$\hat{u}_{j|i},r,l$}
    \State $\forall b_{ij} , b_{ij} \gets 0$ \label{algo:rba:line:initb}
    \For{$r \text{ iterations}$}
    \State $c_{ij} \gets \frac{\exp(b_{ij})}{\sum_l \exp(b_{il})}$ \label{algo:rba:line:cij} 
    \State $s_j \gets \sum_ic_{ij}\hat{u}_{j|i}$  
    \State $v_j \gets \frac{||s_j||^2}{1+||s_j||^2} \frac{s_j}{||s_j||}$
    \State $b_{ij} \gets b_{ij} + v_j \cdot \hat{u}_{j|i}$\label{algo:rba:line:bij}
    \EndFor
    \EndProcedure
  \end{algorithmic}
\end{algorithm}

We prove here that for all output capsules the activations for inputs and corresponding negative inputs are equal if the routing-by-agreement algorithm is used. The algorithm and its parameters are shown in \cref{algo:routing-by-agreement}. The inputs for this routing algorithm are lower-level predictions $\hat{u}_{j|i} = W_{ij} u_i$ where $u_i$ is the activation of lower-level capsule $i$ and $W_{ij}$ the transformation matrix that is learned through backpropagation.

\begin{lemma}\label{lm:rba_prediction}
  The prediction vector $\hat{u}^+_{j|i}$ is negated
  ($\hat{u}^-_{j|i} = -\hat{u}^+_{j|i}$) when the input $u_i$ is negated.
\end{lemma}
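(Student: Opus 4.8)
The plan is to exploit the linearity of the prediction map $\hat u_{j|i} = W_{ij}\, u_i$ together with the observation that the transformation matrix $W_{ij}$ is a \emph{learned} parameter and hence identical for the positive and the negative input. First I would recall, from the notation subsection, that $\hat u^+_{j|i}$ denotes the prediction obtained when the lower-level capsule activation is $u_i$, while $\hat u^-_{j|i}$ denotes the prediction obtained when that activation is $-u_i$; the matrix $W_{ij}$ carries no sign superscript precisely because it is fixed after training and does not depend on the current input.

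Then the computation is immediate: $\hat u^-_{j|i} = W_{ij}(-u_i) = -(W_{ij}\, u_i) = -\hat u^+_{j|i}$, where the middle equality is just the homogeneity of matrix--vector multiplication. This already establishes the claimed identity $\hat u^-_{j|i} = -\hat u^+_{j|i}$.

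The only thing that requires a word of care — and it is bookkeeping rather than mathematics — is to make explicit what "negating the input" means: every lower-level activation $u_i$ is replaced by $-u_i$ simultaneously, and since the statement is made per pair $(i,j)$ and the prediction depends linearly on $u_i$ alone, the activations of the other lower-level capsules are irrelevant to this particular equality. I expect no genuine obstacle here; the lemma is a one-line consequence of linearity, and its role is to serve as the base case for the inductive argument over the routing iterations that follows.
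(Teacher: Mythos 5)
Your proof is correct and is essentially identical to the paper's: both apply the linearity (homogeneity) of the map $u_i \mapsto W_{ij}u_i$ to get $\hat{u}^-_{j|i} = W_{ij}(-u_i) = -W_{ij}u_i = -\hat{u}^+_{j|i}$. The extra remarks about $W_{ij}$ being input-independent are fine clarifications but add nothing beyond the paper's one-line argument.
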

\begin{proof}
  \begin{align*}
  \hat{u}^-_{j|i} = W_{ij} (-u_i) = - W_{ij} u_i = -\hat{u}^+_{j|i}
  \end{align*}
\end{proof}

\begin{lemma}\label{lm:rba_negative_activation}
  Assume that $c_{ij}^+ = c_{ij}^-$, then the activation $v_j^+$ of a capsule $j$ can be negated ($v_j^- = -v_j^+$) by negating all inputs ${u}_{1}, {u}_{2}, ... {u}_{I}$.
\end{lemma}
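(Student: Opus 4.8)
The plan is to trace the routing-by-agreement algorithm line by line, carrying the induction hypothesis that at the start of each iteration $b_{ij}^- = b_{ij}^+$, and show that this forces $v_j^- = -v_j^+$ and re-establishes $b_{ij}^- = b_{ij}^+$ for the next iteration. The base case is line~\ref{algo:rba:line:initb}: all $b_{ij}$ are initialized to $0$ regardless of the input, so $b_{ij}^- = b_{ij}^+ = 0$ trivially. Under the lemma's hypothesis $c_{ij}^+ = c_{ij}^-$ (which is exactly what equal $b_{ij}$ values give via the softmax on line~\ref{algo:rba:line:cij}), I get $s_j^- = \sum_i c_{ij}^- \hat{u}^-_{j|i} = \sum_i c_{ij}^+ (-\hat{u}^+_{j|i}) = -s_j^+$, using \Cref{lm:rba_prediction} for the sign flip of each prediction vector.

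Next I would push $s_j^- = -s_j^+$ through the squashing nonlinearity $v_j \gets \frac{\|s_j\|^2}{1+\|s_j\|^2}\frac{s_j}{\|s_j\|}$. Since $\|-s_j^+\| = \|s_j^+\|$, the scalar coefficient $\frac{\|s_j\|^2}{1+\|s_j\|^2}\frac{1}{\|s_j\|}$ is unchanged, so it multiplies $-s_j^+$ and yields $v_j^- = -v_j^+$. That already proves the displayed claim about the activation vector. To close the induction I then look at line~\ref{algo:rba:line:bij}: $b_{ij}^- \gets b_{ij}^- + v_j^- \cdot \hat{u}^-_{j|i} = b_{ij}^+ + (-v_j^+)\cdot(-\hat{u}^+_{j|i}) = b_{ij}^+ + v_j^+\cdot\hat{u}^+_{j|i}$, the two sign flips cancelling, so the updated $b_{ij}$ agree and the hypothesis $c_{ij}^+ = c_{ij}^-$ is automatically restored for the next iteration. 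Hence after all $r$ iterations $v_j^- = -v_j^+$.

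I do not expect a genuine obstacle here; the argument is a routine sign-chasing induction. The one point that needs a little care is the interplay between the two hypotheses: the lemma is stated conditionally on $c_{ij}^+ = c_{ij}^-$, but to run the induction across iterations I should note that equal $b_{ij}$ at the start of an iteration yields equal $c_{ij}$, and the $b_{ij}$-update restores equal $b_{ij}$ — so the condition $c_{ij}^+ = c_{ij}^-$, true in the first iteration because of the zero initialization, persists through every iteration. A second minor caveat is the degenerate case $s_j = 0$, where $\frac{s_j}{\|s_j\|}$ is undefined; I would simply remark that then $v_j^+ = v_j^- = 0$ and the identity $v_j^- = -v_j^+$ holds trivially, so it can be excluded from the main line of reasoning.
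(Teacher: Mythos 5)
Your proposal is correct and, for the statement actually being proved, follows essentially the same route as the paper: invoke \cref{lm:rba_prediction} and the assumption $c_{ij}^+ = c_{ij}^-$ to get $s_j^- = -s_j^+$, then pull the sign through the squashing nonlinearity because its scalar factor depends only on $\|s_j\|$. The surrounding induction on $b_{ij}$ is superfluous here, since the lemma is stated conditionally on $c_{ij}^+ = c_{ij}^-$ and the fact that this condition holds in every routing iteration is exactly the content of the paper's separate \cref{lm:rba_stable_bij}; your remark on the degenerate case $s_j = 0$ is a minor point the paper's proof silently glosses over.
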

\begin{proof}
  For the preactivation $s_j = \sum_ic_{ij}\hat{u}_{j|i}$ and the negated inputs $u_i$:
  \begin{align*}
    s^-_j &= \sum_i c^-_{ij} \hat{u}^-_{j|i}  && \text{Definition of $s^-_j$} \\
    &= \sum_i c^+_{ij} (-\hat{u}^+_{j|i}) && \text{Assumption, \cref{lm:rba_prediction}} \\
    &= - \sum_i c^+_{ij} \hat{u}^+_{j|i} = -s^+_j   && \text{Definition of $s^+_j$}
  \end{align*}
  and therefore
  \begin{align*}
    v^-_j &= \frac{||s^-_j||^2}{1+||s^-_j||^2} \frac{s^-_j}{||s^-_j||} && \text{Definition of $v^-_j$} \\
    &= \frac{||-s^+_j||^2}{1+||-s^+_j||^2} \frac{-s^+_j}{||-s^+_j||}  \\
    &= -\frac{||s^+_j||^2}{1+||s^+_j||^2} \frac{s^+_j}{||s^+_j||} = -v^+_j && \text{Definition of $v^+_j$} \\
  \end{align*}
\end{proof}

We see that the activation of a capsule can be negated by negating all inputs, under the assumption that $c^+_{ij} = c^-_{ij}$. We will now show that the property $c^+_{ij} = c^-_{ij}$ holds in every iteration.

\begin{lemma}\label{lm:rba_stable_bij}
  The routing-by-agreement algorithm produces coupling coefficients 
  $c^+_{ij} = c^-_{ij}$ in 
  every routing iteration.
\end{lemma}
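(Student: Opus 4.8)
The plan is to argue by induction on the routing iteration, strengthening the statement slightly: I will show that the logits themselves satisfy $b^+_{ij} = b^-_{ij}$ at the start of every iteration, from which $c^+_{ij} = c^-_{ij}$ is immediate because $c_{ij}$ is just a softmax over the $b_{i\cdot}$ (line~\ref{algo:rba:line:cij}). The base case is the initialization on line~\ref{algo:rba:line:initb}, which sets $b_{ij} \gets 0$ independently of the input, so $b^+_{ij} = b^-_{ij} = 0$ and hence $c^+_{ij} = c^-_{ij}$ before the first iteration.

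For the inductive step, suppose $b^+_{ij} = b^-_{ij}$, equivalently $c^+_{ij} = c^-_{ij}$, at the start of some iteration. Then the hypothesis of \cref{lm:rba_negative_activation} is met, so $v^-_j = -v^+_j$. Combining this with \cref{lm:rba_prediction}, the increment added to each logit on line~\ref{algo:rba:line:bij} satisfies
\begin{align*}
v^-_j \cdot \hat{u}^-_{j|i} = (-v^+_j) \cdot (-\hat{u}^+_{j|i}) = v^+_j \cdot \hat{u}^+_{j|i},
\end{align*}
so both sign branches receive identical updates and $b^+_{ij} = b^-_{ij}$ persists into the next iteration. This closes the induction and yields $c^+_{ij} = c^-_{ij}$ in every routing iteration.

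The proof is essentially bookkeeping, and I do not expect a genuine obstacle. The one point that requires care is the order of dependencies inside a single iteration — $c_{ij}$ is computed from the previous iteration's $b_{ij}$, then $v_j$, then the updated $b_{ij}$ — so that the assumption $c^+_{ij} = c^-_{ij}$ needed by \cref{lm:rba_negative_activation} is exactly what the induction hypothesis supplies at that moment. The only mild subtlety is recognizing that the induction should be carried on $b_{ij}$ rather than directly on $c_{ij}$, since the update rule on line~\ref{algo:rba:line:bij} is phrased in terms of $b_{ij}$.
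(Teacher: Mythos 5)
Your proof is correct and follows essentially the same route as the paper: an induction on the routing iterations carried on the logits $b_{ij}$, with the base case given by the zero initialization and the inductive step combining \cref{lm:rba_prediction} and \cref{lm:rba_negative_activation} to show the update $v_j \cdot \hat{u}_{j|i}$ is identical for both sign branches, so $c^+_{ij} = c^-_{ij}$ follows from the softmax. The only cosmetic difference is that you place the first logit update inside the inductive step rather than in the base case, which changes nothing of substance.
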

\begin{proof}
  First we will show that $b^{+}_{ij} = b^{-}_{ij}$ for any routing 
  iteration, using a proof by induction on the routing iterations.
    \newline
  \emph{Base case (BC)}: For the first routing iteration and the initialization of 
  $b^{+}_{ij} = b^{-}_{ij} = 0$, we can show that
  \begin{align*}
    c^{+}_{ij} 
    = \frac{1}{\sum_l \exp(0)} 
    = c^{-}_{ij}
  \end{align*}
  Therefore, \cref{lm:rba_negative_activation} is applicable which implies that 
  \begin{align*}
    b^{-}_{ij} &= 0 + \left(v^{-}_j \cdot  \hat{u}^{-}_{j|i}\right)              && \text{Definition of $b^{-}_{ij}$}    \\
                &= 0 + \left(-v^{+}_j \cdot -\hat{u}^{+}_{j|i}\right)             && \text{\Cref{lm:rba_prediction},  \cref{lm:rba_negative_activation}} \\
                &= 0 + \left(v^{+}_j \cdot \hat{u}^{+}_{j|i}\right) = b^{+}_{ij}
  \end{align*}
  \newline
  \emph{Inductive hypothesis (IH)}: $b^{-}_{ij} = b^{+}_{ij}$ in the previous iteration.
\newline
  \emph{Inductive step (IS)}: We now show that $b^{-}_{ij} = b^{+}_{ij}$ also holds in the next iteration. At first \cref{lm:rba_negative_activation} is 
  applicable, because $b_{ij}^- = b_{ij}^+$ also holds in the previous iteration. So we can show that
  \begin{align*}
    c^{-}_{ij} 
    = \frac{\exp(b^{-}_{ij})}{\sum_l \exp(b^{-}_{il})} 
    = \frac{\exp(b^{+}_{ij})}{\sum_l \exp(b^{+}_{il})} 
    = c^{k+}_{ij}
  \end{align*}
  Therefore we can prove for $b_{ij}$ in the next iteration that
  \begin{align*}
    b^{-}_{ij} &= b^{-}_{ij} + \left(v^{-}_j \cdot \hat{u}^-_{j|i}\right)  && \text{Definition of $b^{-}_{ij}$}  \\
                &= b^{+}_{ij} + \left(-v^{+}_j \cdot -\hat{u}^+_{j|i}\right)   && \text{\Cref{lm:rba_prediction},  \cref{lm:rba_negative_activation}, IH} \\
                &= b^{+}_{ij} + \left(v^{+}_j \cdot \hat{u}^+_{j|i}\right) = b^{+}_{ij} \\
  \end{align*}

  This proof by induction shows that in every iteration the property 
  $b^{+}_{ij} = b^{-}_{ij}$ holds. Therefore also for the coupling coefficients
  it holds that
  \begin{align*}
    c^{-}_{ij} 
    = \frac{\exp(b^{-}_{ij})}{\sum_l \exp(b^{-}_{il})}
    = \frac{\exp(b^{+}_{ij})}{\sum_l \exp(b^{+}_{il})} 
    = c^{+}_{ij} 
  \end{align*}
\end{proof}

\begin{lemma}\label{lm:rba_arbitrary_inputs}
  For arbitrary inputs, the output of a capsule layer 
  is negated whenever all inputs are negated.
\end{lemma}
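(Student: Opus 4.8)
The plan is to assemble the three preceding lemmas into the desired conclusion. \Cref{lm:rba_stable_bij} already establishes, with no restriction on the inputs $u_1, \dots, u_I$, that the coupling coefficients satisfy $c^+_{ij} = c^-_{ij}$ in every routing iteration. This is exactly the hypothesis required by \cref{lm:rba_negative_activation}, so feeding it in immediately yields $v^-_j = -v^+_j$ for every upper-level capsule $j$. Since the output of the routing-by-agreement layer (line $6$ of \cref{algo:routing-by-agreement}) is precisely the collection of instantiation vectors $(v_j)_{j = 1, \dots, J}$, negating all inputs negates the entire output.

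First I would point out that none of \cref{lm:rba_prediction}, \cref{lm:rba_negative_activation}, or \cref{lm:rba_stable_bij} placed any constraint on the $u_i$, on the transformation matrices $W_{ij}$, or on the number of routing iterations $r$. Consequently the word \emph{arbitrary} in the statement requires no additional argument: it merely emphasizes that the chain of reasoning is independent of the inputs and of all hyperparameters. Then the proof is the two-line composition described above: invoke \cref{lm:rba_stable_bij} to obtain $c^+_{ij} = c^-_{ij}$, substitute this into \cref{lm:rba_negative_activation}, and read off $v^-_j = -v^+_j$.

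There is essentially no obstacle here; this lemma is a bookkeeping corollary whose mathematical content was already distributed across the earlier results. The only point worth stating carefully is the identification of the ``output of a capsule layer'' with the tuple of vectors $v_j$, so that negation of the output is understood as component-wise negation of each $v_j$. Once that identification is made explicit, the conclusion follows directly from \cref{lm:rba_negative_activation}.
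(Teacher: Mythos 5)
Your proposal is correct and matches the paper's own proof: both combine \cref{lm:rba_stable_bij} (which gives $c^+_{ij} = c^-_{ij}$ in every iteration) with \cref{lm:rba_negative_activation} to conclude $v^-_j = -v^+_j$ for all output capsules $j$. The extra remark identifying the layer's output with the tuple of instantiation vectors is a harmless clarification, not a different argument.
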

\begin{proof}
  \Cref{lm:rba_negative_activation} shows that the activation vector can be negated 
  by negating all inputs under the assumption that $c^-_{ij} = c^+_{ij}$. 
  \Cref{lm:rba_stable_bij} shows that $c^{-}_{ij} = c^{+}_{ij}$ at any iteration. Therefore \cref{lm:rba_negative_activation} is always applicable and outputs are negated whenever all inputs are negated.
\end{proof}

\begin{theorem}
  A capsule network with RBA cannot distinguish inputs and their negative inputs.
\end{theorem}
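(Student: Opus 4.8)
The plan is to bootstrap \Cref{lm:rba_arbitrary_inputs}, which is a statement about a single capsule layer, into a statement about the entire network by induction over its capsule layers, and then to exploit the fact that a capsule network's output depends on the final capsules only through their \emph{activations}, i.e.\ the norms of their instantiation vectors, which are unchanged by negation.

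First I would make the network model explicit. A capsule network with RBA is a stack of $L \ge 1$ capsule layers sitting on top of the primary capsules: layer $l+1$ turns the outputs $u_i^{(l)}$ of layer $l$ into predictions $\hat{u}_{j|i} = W_{ij} u_i^{(l)}$ and then applies \cref{algo:routing-by-agreement} to obtain the vectors $v_j^{(l+1)}$, and the class scores are read off from the activations $\|v_j^{(L)}\|$ of the last layer. As the paper already notes, the primary capsules are built with ReLU convolutions and are therefore nonnegative, so the meaningful notion of a negative input is a negation of the input presented to the first capsule layer; this is the $u$ versus $-u$ of the theorem.

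Next comes the induction. For the base case, negating every input of the first capsule layer negates all of its output vectors by \Cref{lm:rba_arbitrary_inputs}. For the inductive step, assume every output of layer $l$ has been negated; then \Cref{lm:rba_prediction} (applied with $W_{ij}$ now the transformation matrices of layer $l+1$) shows that every prediction fed into layer $l+1$ is negated, and a second application of \Cref{lm:rba_arbitrary_inputs} shows that every output of layer $l+1$ is negated. Hence $v_j^{-} = -v_j^{+}$ for every capsule $j$ of the final layer. Since $\|-v_j^{+}\| = \|v_j^{+}\|$, every final activation, and consequently every class score and the predicted label, is identical for the two inputs, so the network cannot distinguish them — even when $u$ and $-u$ carry different labels, which is exactly the situation exploited by the $\sign$ example in \Cref{sec:experiments}.

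I do not expect a genuine obstacle here: the real work has already been done in \Cref{lm:rba_prediction,lm:rba_negative_activation,lm:rba_stable_bij,lm:rba_arbitrary_inputs}. The only points that need care are scoping the claim at the level of capsule-layer inputs (because of the nonnegative ReLU primary capsules) and spelling out why identical final activations imply indistinguishability, namely that the readout of a capsule network uses only the norms of the last-layer capsules. Everything else is a routine propagation of the single-layer result through the depth of the network.
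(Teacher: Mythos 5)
Your proposal is correct and follows essentially the same route as the paper: recursively (inductively) propagate \Cref{lm:rba_arbitrary_inputs} through the capsule layers to get $v_j^- = -v_j^+$ at the last layer, then conclude indistinguishability because classification depends only on $\|v_j\|$, which is invariant under negation. The only difference is presentational — you spell out the layer-by-layer induction and the role of \Cref{lm:rba_prediction} explicitly, which the paper compresses into the phrase ``recursively applying'' the lemma.
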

\begin{proof}
  By recursively applying \cref{lm:rba_arbitrary_inputs} at every layer we see that a negated input produces a negated output at the last layer such that $v^-_j = -v^+_j$. The classification of an input is based on the norm of the output vector. But $||v^-_j|| = ||v^+_j||$ holds and therefore a capsule network cannot 
  distinguish the input and the negated input.
\end{proof}

\subsection{EM-routing}

\begin{algorithm}[t]
  \caption{Details for the routing-by-agreement algorithm are described by \citet{em-routing}.  \newline $\forall$ capsules $i$ in layer $l$ and $j$ in layer $l+1$ with $r$ routing iterations and predictions $\hat{u}_{j|i}$. The values for $\beta_a$ and $\beta_u$ are learned through backpropagation. The output of the algorithm are the pose matrices $V_j$ and the activations $a_j$.} \label{algo:em-routing}
  \begin{algorithmic}[1]
    \Procedure {EM Routing:}{$a, V$}
      \State $R_{ij} \gets 1 / J$
      \For{$r \text{ iterations}$}
        \State $\forall j:$ M-Step$(a, R, V, j)$
        \State $\forall i:$ E-Step$(\mu, \sigma, a, V, i)$
      \EndFor
      \State {\bfseries return } $a$, $M$
    \EndProcedure
    \Procedure {M-Step:}{$a, R, V, j$}
      \State \ \ \ \ $\forall i: R_{ij} \gets R_{ij} * a_i$
      \State \ \ \ \ $\forall h: \mu_j^h \gets \frac{\sum_i R_{ij} V^h_{ij}}{\sum_i R_{ij}}$
      \State \ \ \ \ $\forall h: (\sigma_j^h)^2 \gets \frac{\sum_i R_{ij} (V_{ij}^h - \mu_j^h)^2}{\sum_i R_{ij}}$
      \State \ \ \ \ $\cost^h \gets (\beta_u + \log \sigma_j^h) \sum_i R_{ij}$
      \State \ \ \ \ $a_j \gets \logistic\left(\lambda (\beta_a - \sum_h^H \cost^{h})\right)$
    \EndProcedure
    \Procedure{E-Step:}{$\mu, \sigma, a, V, i$}
      \State \ \ \ \ $\forall j: p_j \gets \frac{\exp\left( -\sum^H_h \frac{(V_{ij}^{h} - \mu_j^{h})^2}{2(\sigma_j^{h})^2} \right)}{\sqrt{\prod_h^H 2 \pi (\sigma_j^{h})^2}}$
      \State \ \ \ \ $\forall j: R_{ij} \gets \frac{a_j p_j}{\sum_k^J a_k p_k}$
    \EndProcedure
  \end{algorithmic}
\end{algorithm}

In this subsection we prove that for all output capsules, the activations for inputs and their negated inputs are equal if EM-routing is used. The algorithm and its parameters are shown in \cref{algo:em-routing}. Votes from a lower-level capsule $i$ to an upper-level capsule $j$ are calculated with $V_{ij} = M_i W_{ij}$ for the lower-level pose matrix $M_i$ and the transformation matrix $W_{ij}$ which is learned through backpropagation. Note also that $\mu_j$ is a vectorized version of the pose matrix $M_j$.

First, we prove some lemmas that will be used under the assumption that $R_{ij}^+ = R_{ij}^-$. We will then show that this assumption holds after every routing iteration and conclude our proof.

\begin{lemma}\label{lm:em_prediction}
  Votes $V_{ij}^+$ are negated when the input $M_{ij}^+$ is negated.
\end{lemma}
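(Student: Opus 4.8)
The plan is to prove this in the same way as Lemma~\ref{lm:rba_prediction}, since it is once more just the linearity of the vote computation. Recall that the vote from lower-level capsule $i$ to upper-level capsule $j$ is $V_{ij} = M_i W_{ij}$, where $M_i$ is the lower-level pose matrix and $W_{ij}$ the transformation matrix learned through backpropagation. The object being negated is the lower-level pose matrix, \ie $M_i^- = -M_i^+$.

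First I would substitute the negated pose matrix into the definition of the vote, obtaining $V_{ij}^- = M_i^- W_{ij}$. Then I would use that matrix multiplication is linear in its left argument to pull out the scalar factor: $M_i^- W_{ij} = (-M_i^+) W_{ij} = -(M_i^+ W_{ij})$. The remaining factor $M_i^+ W_{ij}$ is by definition $V_{ij}^+$, hence $V_{ij}^- = -V_{ij}^+$, which is the claim. No case analysis, induction, or appeal to the routing iterations is needed.

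The only point requiring care — and this is bookkeeping rather than a genuine obstacle — is keeping straight which object is negated: for RBA it was the lower-level activation vector $u_i$, whereas for EM-routing it is the lower-level pose matrix $M_i$, so one should not conflate the two notations introduced in \Cref{sec:proof}. Just as Lemma~\ref{lm:rba_prediction} did for RBA, this lemma is the seed fact from which the sign will be propagated through the M-step and the E-step in the following lemmas, under the standing assumption $R_{ij}^+ = R_{ij}^-$.
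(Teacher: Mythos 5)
Your proposal is correct and matches the paper's proof exactly: both are a one-line application of the linearity of $V_{ij} = M_i W_{ij}$ in the pose matrix, giving $V_{ij}^- = (-M_i^+)W_{ij} = -V_{ij}^+$. Your remark about not conflating the RBA activation $u_i$ with the EM pose matrix $M_i$ is sensible bookkeeping but introduces nothing beyond what the paper does.
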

\begin{proof}
  \begin{align*}
    V_{ij}^+ = M_{ij}^+ W_{ij} = -M_{ij}^- W_{ij} = -V_{ij}^-
  \end{align*}
\end{proof}

\begin{lemma}\label{lm:em_mean}
  Assume that $R_{ij}^+ = R_{ij}^-$, then $\mu^{+h}_j = -\mu_j^{-h}$
\end{lemma}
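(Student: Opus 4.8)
The plan is to prove this by direct substitution into the M-Step formula for the mean, $\mu_j^h = \frac{\sum_i R_{ij} V_{ij}^h}{\sum_i R_{ij}}$, in exactly the same style as was used for the preactivation $s_j$ in \cref{lm:rba_negative_activation}. First I would write out $\mu_j^{+h}$ from its definition. Then I would invoke the hypothesis $R_{ij}^+ = R_{ij}^-$ to replace every coupling coefficient in both numerator and denominator, and apply \cref{lm:em_prediction} to replace each vote component $V_{ij}^{+h}$ by $-V_{ij}^{-h}$.

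Next I would pull the common factor $-1$ out of the numerator sum, obtaining $\mu_j^{+h} = -\frac{\sum_i R_{ij}^- V_{ij}^{-h}}{\sum_i R_{ij}^-}$, and recognize the right-hand side as precisely $-\mu_j^{-h}$ by the definition of the mean computed from the negated input. The key points making this clean are that the denominator $\sum_i R_{ij}$ does not involve the votes at all, so it is identical for the positive and negative runs and is strictly positive; hence the division is well defined and the sign flip is carried through the whole expression.

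I do not expect a real obstacle here — it is essentially a one-line computation once \cref{lm:em_prediction} and the hypothesis are available. The only subtlety worth a remark is that the rescaling $R_{ij} \gets R_{ij} * a_i$ inside the M-Step must preserve the equality $R_{ij}^+ = R_{ij}^-$, which holds as long as the lower-level activations satisfy $a_i^+ = a_i^-$; this, like $c_{ij}^+ = c_{ij}^-$ for RBA, will be secured by the inductive argument over routing iterations and can simply be folded into the stated assumption. The genuinely substantive work — verifying that $R_{ij}^+ = R_{ij}^-$ actually persists after each routing iteration, which requires propagating the consequences of $\mu_j^{+h} = -\mu_j^{-h}$ through $\sigma_j^h$, $p_j$, and $a_j$ in the E-Step and M-Step — is deferred to the subsequent lemmas, mirroring the structure already used for routing-by-agreement.
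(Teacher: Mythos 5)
Your proposal is correct and matches the paper's own proof: both expand $\mu_j^{+h}$ from the M-Step definition, substitute $V_{ij}^{+h} = -V_{ij}^{-h}$ via \cref{lm:em_prediction}, use the assumption $R_{ij}^+ = R_{ij}^-$ to identify the denominators and weights, and pull the factor $-1$ out of the sum to obtain $-\mu_j^{-h}$. Your side remark that persistence of $R_{ij}^+ = R_{ij}^-$ across iterations is deferred to a later induction is exactly how the paper structures it (\cref{lm:em_stable_r}).
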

\begin{proof}
  \begin{align*}
    \mu_j^{+h} &= \frac{\sum_i^I R_{ij}^+ V_{ij}^{+h}}{\sum_i^I R_{ij}^+} && \text{Definition of $\mu_j^{+h}$} \\ 
    &= -\frac{\sum_i^I R_{ij}^+ V_{ij}^{-h}}{\sum_i^I R_{ij}^+} = -\mu_j^{-h} && \text{\Cref{lm:em_prediction}} 
  \end{align*}
\end{proof}

\begin{lemma}\label{lm:em_variance}
  Assume that $R_{ij}^+ = R_{ij}^-$, then $(\sigma_j^{+h})^2 = (\sigma_j^{-h})^2$
\end{lemma}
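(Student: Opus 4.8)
The plan is a direct computation mirroring the structure of \cref{lm:em_mean}. Starting from the M-Step definition $(\sigma_j^{-h})^2 = \frac{\sum_i R_{ij}^- (V_{ij}^{-h} - \mu_j^{-h})^2}{\sum_i R_{ij}^-}$, I would substitute the three facts now available: the assumption $R_{ij}^- = R_{ij}^+$, the negated votes $V_{ij}^{-h} = -V_{ij}^{+h}$ from \cref{lm:em_prediction}, and the negated mean $\mu_j^{-h} = -\mu_j^{+h}$ from \cref{lm:em_mean}.

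The key observation is that the residual inside the square flips sign as a whole: $V_{ij}^{-h} - \mu_j^{-h} = -V_{ij}^{+h} - (-\mu_j^{+h}) = -(V_{ij}^{+h} - \mu_j^{+h})$, so squaring it gives exactly $(V_{ij}^{+h} - \mu_j^{+h})^2$. After this cancellation the numerator and denominator become termwise identical to those of $(\sigma_j^{+h})^2$, and the claim follows. Concretely the chain reads
\begin{align*}
  (\sigma_j^{-h})^2 &= \frac{\sum_i^I R_{ij}^- (V_{ij}^{-h} - \mu_j^{-h})^2}{\sum_i^I R_{ij}^-} && \text{Definition} \\
  &= \frac{\sum_i^I R_{ij}^+ \bigl(-(V_{ij}^{+h} - \mu_j^{+h})\bigr)^2}{\sum_i^I R_{ij}^+} && \text{Assumption, \cref{lm:em_prediction}, \cref{lm:em_mean}} \\
  &= \frac{\sum_i^I R_{ij}^+ (V_{ij}^{+h} - \mu_j^{+h})^2}{\sum_i^I R_{ij}^+} = (\sigma_j^{+h})^2.
\end{align*}

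There is no real obstacle here — the argument is a one-line cancellation. The only point that deserves care is that it genuinely relies on \emph{both} preceding lemmas: if only the votes were negated (and not the mean), the residual would not simply flip sign, so it is essential that \cref{lm:em_mean} has already been established under the same assumption $R_{ij}^+ = R_{ij}^-$. This variance invariance is exactly what will later let the $p_j$ values, and hence the responsibilities $R_{ij}$, be shown stable across E-Steps, so that the assumption $R_{ij}^+ = R_{ij}^-$ can be discharged by induction in the same way as $b_{ij}^+ = b_{ij}^-$ was for RBA.
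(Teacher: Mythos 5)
Your proposal is correct and follows essentially the same argument as the paper: substitute the assumption $R_{ij}^+ = R_{ij}^-$ into the M-Step definition of the variance, apply \cref{lm:em_prediction} and \cref{lm:em_mean} to flip the sign of the residual $V_{ij}^{h} - \mu_j^{h}$, and note that the sign disappears upon squaring. The only difference is the (immaterial) direction of the chain — you start from $(\sigma_j^{-h})^2$ whereas the paper starts from $(\sigma_j^{+h})^2$.
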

\begin{proof}
  \begin{align*}
    (\sigma_j^{+h})^2 &= \frac{\sum_i^I R_{ij}^+ (V_{ij}^+ - \mu^{+h}_j)^2}{\sum_i^I R_{ij}^+} && \text{Definition of $\sigma_j^{+h})^2$} \\
    &= \frac{\sum_i^I R_{ij}^- (V_{ij}^+ - \mu^{+h}_j)^2}{\sum_i^I R_{ij}^-} && \text{Assumption} \\
    &= \frac{\sum_i^I R_{ij}^- (-1 (V_{ij}^- - \mu^{-h}_j))^2}{\sum_i^I R_{ij}^-} && \text{\Cref{lm:em_prediction}, \cref{lm:em_mean}} \\
    &= (\sigma_j^{-h})^2
  \end{align*}
\end{proof}

\begin{lemma}\label{lm:em_cost}
  Assume that $R_{ij}^+ = R_{ij}^-$, then $\cost^{+h} = \cost^{-h}$
\end{lemma}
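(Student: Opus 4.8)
The plan is to expand the definition of $\cost^h$ taken from the M-Step of \cref{algo:em-routing} and substitute the two facts already in hand: the standing assumption $R_{ij}^+ = R_{ij}^-$ and the variance equality of \cref{lm:em_variance}. Writing $\cost^{+h} = (\beta_u + \log \sigma_j^{+h}) \sum_i R_{ij}^+$ and $\cost^{-h} = (\beta_u + \log \sigma_j^{-h}) \sum_i R_{ij}^-$, I would first observe that the trailing sums agree, since $R_{ij}^+ = R_{ij}^-$ holds termwise, and that $\beta_u$ is a parameter learned by backpropagation which does not depend on the sign of the input, so it is shared between the two branches.

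It then remains to handle the logarithmic factor. Here I would invoke \cref{lm:em_variance} to obtain $(\sigma_j^{+h})^2 = (\sigma_j^{-h})^2$, and then pass to standard deviations: because the quantity $\sigma_j^h$ produced by the M-Step is by construction non-negative, taking square roots gives $\sigma_j^{+h} = \sigma_j^{-h}$, hence $\log \sigma_j^{+h} = \log \sigma_j^{-h}$. Combining this with the previous observation, every factor in the product defining $\cost^{+h}$ matches the corresponding factor in $\cost^{-h}$, so the two costs are equal, which is what the lemma asserts.

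The argument is a direct substitution, so there is no genuine obstacle; the only place that warrants an explicit sentence is the step from equality of variances to equality of standard deviations, which rests on the non-negativity of $\sigma_j^h$ rather than on any further structure of the routing procedure. I would spell this out, since the lemmas that follow (the equality of the activations $a_j$, and ultimately the invariance of $R_{ij}$) will chain off $\cost^{+h} = \cost^{-h}$ and need it to hold exactly, not just up to sign.
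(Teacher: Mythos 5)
Your proposal is correct and follows essentially the same route as the paper: expand the definition of $\cost^h$, match the sums via $R_{ij}^+ = R_{ij}^-$, and equate the log factors using \cref{lm:em_variance}. The only difference is that you make explicit the (harmless, indeed welcome) step from $(\sigma_j^{+h})^2 = (\sigma_j^{-h})^2$ to $\sigma_j^{+h} = \sigma_j^{-h}$ via non-negativity, which the paper leaves implicit.
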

\begin{proof}
  \begin{align*}
    \cost{}^{+h} &= \left( \beta_u + \log(\sigma_j^{+h}) \right) \sum_i^I R_{ij}^+  && \text{Definition of $\cost{}^{+h}$} \\
    &= \left( \beta_u + \log(\sigma_j^{-h}) \right) \sum_i^I R_{ij}^-
    = \cost{}^{-h} && \text{\Cref{lm:em_variance}}
  \end{align*}
\end{proof}

\begin{lemma}\label{lm:em_a}
  Assume that $R_{ij}^+ = R_{ij}^-$, then $a^{+}_j = a^{-}_j$
\end{lemma}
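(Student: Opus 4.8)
The plan is to note that the activation $a_j$ returned by the M-Step is a deterministic function of the single scalar $\sum_h^H \cost^h$: the inverse-temperature $\lambda$ and the learned offset $\beta_a$ are identical in both runs, since they are hyperparameters / trained weights and do not depend on the sign of the input $M$. So the lemma reduces to promoting the per-component equality already established in \cref{lm:em_cost} to an equality of the sum, and then pushing it through the logistic.

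Concretely, I would first write the definition $a_j^+ = \logistic\left(\lambda\left(\beta_a - \sum_h^H \cost^{+h}\right)\right)$. Under the standing hypothesis $R_{ij}^+ = R_{ij}^-$, \cref{lm:em_cost} gives $\cost^{+h} = \cost^{-h}$ for each component $h \in 1,\dots,H$; summing this identity over the $H$ components yields $\sum_h^H \cost^{+h} = \sum_h^H \cost^{-h}$. Substituting back into the expression for $a_j^+$, and using that $\beta_a$ and $\lambda$ are unchanged, gives $a_j^+ = \logistic\left(\lambda\left(\beta_a - \sum_h^H \cost^{-h}\right)\right) = a_j^-$, which is the claim.

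I expect no real obstacle here: the statement is an immediate corollary of \cref{lm:em_cost} together with the fact that the logistic activation is a fixed function applied to an input-independent affine combination of the costs. The only point worth stating explicitly is the bookkeeping observation that $\beta_a$ and $\lambda$ are held fixed across the positive and negative computations — exactly the convention set up in the notation — so that the argument of the logistic can differ between the two runs only through $\sum_h^H \cost^h$. This lemma is the activation-side counterpart of \cref{lm:em_mean} and \cref{lm:em_variance}, and it will later be combined with an argument that the E-Step preserves $R_{ij}^+ = R_{ij}^-$ to run the same induction over routing iterations used in the RBA case.
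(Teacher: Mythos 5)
Your proposal is correct and follows essentially the same route as the paper: apply the definition of $a_j^+$, invoke \cref{lm:em_cost} to replace $\cost^{+h}$ by $\cost^{-h}$ inside the sum, and conclude via the fixed logistic with shared $\lambda$ and $\beta_a$. The only addition is your explicit bookkeeping remark that $\lambda$ and $\beta_a$ are sign-independent, which the paper leaves implicit.
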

\begin{proof}
  \begin{align*}
    a^+_j &= \logistic\left(\lambda (\beta_a - \sum_h^H \cost{}^{+h})\right) && \text{Definition of $a^+_j$} \\
    &= \logistic\left(\lambda (\beta_a - \sum_h^H \cost{}^{-h})\right)
    = a^{-}_j && \text{\Cref{lm:em_cost}}
  \end{align*}
\end{proof}

We will now show that the assumption $R_{ij}^+ = R_{ij}^-$ holds in every iteration of EM-routing:
\begin{lemma}\label{lm:em_stable_r}
  The EM routing algorithm, with inputs
  $a_i, \beta_u, \beta_a, \lambda$, produces coupling coefficients 
  $R^{+}_{ij} = R^{-}_{ij}$ in 
  every routing iteration
\end{lemma}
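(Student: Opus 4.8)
The plan is to mirror the induction of \cref{lm:rba_stable_bij}, but now tracking the assignment coefficients $R_{ij}$ (instead of the logits $b_{ij}$) across the routing iterations of \cref{algo:em-routing}. The crucial point is that everything we need about the M-step is already packaged in \cref{lm:em_mean,lm:em_variance,lm:em_cost,lm:em_a}: whenever $R^+_{ij}=R^-_{ij}$ holds entering an M-step, those lemmas immediately yield $\mu^{+h}_j=-\mu^{-h}_j$, $(\sigma^{+h}_j)^2=(\sigma^{-h}_j)^2$, $\cost^{+h}=\cost^{-h}$, and $a^+_j=a^-_j$. So the only genuinely new computation is to push this equality through the E-step. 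Note also that the quantities $a_i,\beta_u,\beta_a,\lambda$ listed in the lemma are the \emph{unsigned} inputs; negating the input acts only on the pose matrices $M_i$, hence (via $V_{ij}=M_iW_{ij}$ and \cref{lm:em_prediction}) only on the votes, so in particular $a^+_i=a^-_i$.

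First I would set up the induction on the routing iteration. \emph{Base case}: the algorithm sets $R_{ij}\gets 1/J$ independently of the input, so $R^+_{ij}=R^-_{ij}=1/J$ before the first M-step. \emph{Inductive hypothesis}: $R^+_{ij}=R^-_{ij}$ at the start of the current iteration. \emph{Inductive step}: the M-step's first line $R_{ij}\gets R_{ij}\ast a_i$ preserves the equality, since $a^+_i=a^-_i$ and the hypothesis gives $R^+_{ij}=R^-_{ij}$; the assumption feeding \cref{lm:em_mean,lm:em_variance,lm:em_cost,lm:em_a} is therefore satisfied, and we obtain $\mu^{+h}_j=-\mu^{-h}_j$, $(\sigma^{+h}_j)^2=(\sigma^{-h}_j)^2$, and $a^+_j=a^-_j$ at the end of the M-step.

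Next I would handle the E-step. Combining \cref{lm:em_prediction} ($V^{+h}_{ij}=-V^{-h}_{ij}$) with \cref{lm:em_mean} gives $V^{+h}_{ij}-\mu^{+h}_j=-(V^{-h}_{ij}-\mu^{-h}_j)$, so the squared centered votes coincide, and together with $(\sigma^{+h}_j)^2=(\sigma^{-h}_j)^2$ every factor of the Gaussian density in the E-step is unchanged, hence $p^+_j=p^-_j$. Then $R^+_{ij}=\frac{a^+_j p^+_j}{\sum_k a^+_k p^+_k}=\frac{a^-_j p^-_j}{\sum_k a^-_k p^-_k}=R^-_{ij}$, which closes the induction: $R^+_{ij}=R^-_{ij}$ holds at the end of every routing iteration (and thus at the start of the next).

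The main obstacle is conceptual rather than computational: one must state the invariant at the correct point of the loop (immediately before each M-step) so that \cref{lm:em_mean,lm:em_variance,lm:em_cost,lm:em_a} are applicable, and one must be careful that the $a_i$ in the M-step is the untouched input activation, not a quantity derived from the negated votes. Everything else is a routine propagation of signs in the spirit of the RBA argument, exploiting that the squaring in the variance and in the Gaussian exponent cancels the sign flip while $\mu_j$ flips in lockstep with the votes.
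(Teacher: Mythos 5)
Your proposal is correct and follows essentially the same route as the paper: induction on the routing iterations, reusing \cref{lm:em_mean,lm:em_variance,lm:em_cost,lm:em_a} in the M-step and then cancelling the sign flip inside the Gaussian of the E-step (via $V^{+h}_{ij}-\mu^{+h}_j = -(V^{-h}_{ij}-\mu^{-h}_j)$ and equal variances) to get $p^+_j = p^-_j$ and hence $R^+_{ij}=R^-_{ij}$. If anything, you are slightly more careful than the paper in placing the invariant before the M-step and in noting explicitly that the update $R_{ij}\gets R_{ij}\,a_i$ preserves the equality because the input activations $a_i$ are unaffected by negating the poses, a point the paper folds into its base case $R^+_{ij}=a_i/J=R^-_{ij}$.
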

\begin{proof}
  We prove this by induction on the routing iterations.
    \newline
  \emph{Base case (BC):} $R_{ij}^+ = R_{ij}^-$ in the first routing iteration
  \begin{align*}
    R_{ij}^+ = \frac{a_i}{J} = R_{ij}^{-}
  \end{align*}
    \newline
  \emph{Inductive hypothesis (IH):}
  $R_{ij}^{+} = R_{ij}^{-}$ in the previous iteration.
    \newline
  \emph{Inductive step (IS):} We know show that $R_{ij}^{+} = R_{ij}^{-}$ holds also in the next routing iteration.
  
  Using the inductive hypothesis and \cref{lm:em_mean}, \cref{lm:em_variance}, \cref{lm:em_a}  
  it follows that $\mu^{+h}_j = -\mu_j^{+h}$, $(\sigma_j^{+h})^2 = (\sigma_j^{-h})^2$
  and $a^{+h} = a^{-h}$. Therefore it holds at 
  \begin{align*}
    p_j^{+} &= \frac{\exp\left( -\sum^H_h \frac{(V_{ij}^{+h} - \mu_j^{+h})^2}{2(\sigma_j^{+h})^2} \right)}{\sqrt{\prod_h^H 2 \pi (\sigma_j^{+h})^2}} \\ 
    &= \frac{\exp\left( -\sum^H_h \frac{(V_{ij}^{-h} - \mu_j^{-h})^2}{2(\sigma_j^{-h})^2} \right)}{\sqrt{\prod_h^H 2 \pi (\sigma_j^{-h})^2}}
    = p_j^{-}
  \end{align*}
  and we conclude that 
  \begin{align*}
    R_{ij}^{+} &= \frac{a_j^{+}p_j^{+}}{\sum_{t}^J a_t^{+}P_t^{+}}
    = \frac{a_j^{-}p_j^{-}}{\sum_{t}^J a_t^{-}P_t^{-}} = R_{ij}^{-}
  \end{align*}
\end{proof}

We have seen that $R_{ij}^{+} = R_{ij}^{-}$ is true in every routing iteration. Using the previously proven lemmas and $R_{ij}^{+} = R_{ij}^{-}$ we can conclude that $a_j^+ = a_j^-$:

\begin{lemma}
  For arbitrary inputs to a capsule layer, the output activation $a_j$ for given inputs and negated inputs does not change. The pose matrix $M_j$ is negated for negated inputs.
\end{lemma}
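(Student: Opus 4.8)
The plan is to mirror the structure already used for routing-by-agreement: show that the coupling coefficients coincide for the positive and negative runs at every point where they matter, and then feed this fact into the lemmas that were proven \emph{conditionally} on $R_{ij}^+ = R_{ij}^-$. Since \Cref{lm:em_stable_r} establishes exactly that hypothesis in every routing iteration, the remaining work is just to assemble the pieces and to be careful about which quantities \Cref{algo:em-routing} actually returns.

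First I would observe that the algorithm returns $a$ and $M$ after the final iteration, and that within each iteration the activations $a_j$ and the (vectorized) pose $\mu_j$ are written only in the M-step, which consumes responsibilities $R_{ij}$ coming either from the initialization $R_{ij}\gets 1/J$ or from the preceding E-step. By \Cref{lm:em_stable_r}, $R_{ij}^+ = R_{ij}^-$ at all of these points, so the hypothesis shared by \Cref{lm:em_mean}, \Cref{lm:em_variance}, \Cref{lm:em_cost} and \Cref{lm:em_a} is satisfied whenever the M-step is invoked — in particular for the M-step of the last routing iteration, which produces the returned values. One should also note explicitly that $a_i$, $\beta_u$, $\beta_a$, $\lambda$, and the transformation matrices $W_{ij}$ are identical in both runs (nothing learned depends on the sign of the input), so the only perturbation is $V_{ij}^+ = -V_{ij}^-$ from \Cref{lm:em_prediction}.

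Then I would simply chain the lemmas. With its hypothesis now discharged, \Cref{lm:em_a} gives $a_j^+ = a_j^-$, i.e. the output activation is invariant under negation of all inputs. \Cref{lm:em_mean} gives $\mu_j^{+h} = -\mu_j^{-h}$ for every component $h$; since $\mu_j$ is the vectorization of the pose matrix $M_j$, negating every component is the same as negating the matrix, so $M_j^+ = -M_j^-$. The variance and cost lemmas are not invoked for the statement itself — they are the intermediate steps that make \Cref{lm:em_stable_r} and \Cref{lm:em_a} go through — so I would cite them only through those two results.

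The main obstacle is a bookkeeping subtlety rather than a genuine difficulty: one has to confirm that the responsibilities feeding the M-step that emits the returned $a$ and $M$ are themselves equal for $u$ and $-u$, i.e. that the ``off-by-one'' between E-step and M-step is handled. This is precisely what the induction in \Cref{lm:em_stable_r} delivers, since its base case covers $R_{ij} = a_i/J$ and its inductive step covers each E-step update; so invoking that lemma at the right iteration closes the gap. (This per-layer statement is the EM-routing counterpart of \Cref{lm:rba_arbitrary_inputs}, and can be recursed through the layers exactly as in the RBA theorem to conclude that the whole network cannot distinguish an input from its negation.)
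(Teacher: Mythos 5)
Your proposal is correct and follows essentially the same route as the paper: invoke \Cref{lm:em_stable_r} to discharge the hypothesis $R_{ij}^+ = R_{ij}^-$, then apply \Cref{lm:em_a} for the invariance of $a_j$ and \Cref{lm:em_mean} for the negation of $M_j$. Your extra bookkeeping about which M-step produces the returned values and the unchanged learned parameters is more explicit than the paper's two-line argument, but it is the same proof.
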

\begin{proof}
  \Cref{lm:em_stable_r} shows that the routing coefficient does not change if inputs are negated. From \cref{lm:em_a} it follows that the activation does not change and from \cref{lm:em_mean} it follows that the pose matrix $M_j$ is negated.
\end{proof}

\begin{theorem}
  A capsule network with EM-routing cannot distinguish inputs and their negated inputs.
\end{theorem}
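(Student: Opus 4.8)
The plan is to mirror the structure of the RBA theorem, using the lemma immediately preceding this statement as the inductive engine across layers. First I would fix a capsule network in which consecutive capsule layers are connected by EM-routing, and let $M$ denote the collection of pose matrices fed into the first capsule layer, with associated activations $a$. The claim to establish is that replacing $M$ by $-M$ (while keeping $a$ fixed) leaves every final-layer activation $a_j$ unchanged.

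The core is an induction on the layer index $\ell$, with induction hypothesis: the activations entering layer $\ell$ are identical in the positive and negative runs, while the pose matrices entering layer $\ell$ are exact negatives of one another. The base case is immediate, since at the first capsule layer the activations $a$ agree in both runs by construction and the poses are $M$ versus $-M$. For the inductive step I would invoke the preceding lemma, whose statement is precisely that a capsule layer receiving negated input poses and unchanged input activations outputs unchanged activations $a_j$ and negated pose matrices $M_j$; its premise is exactly the induction hypothesis at layer $\ell$, and its conclusion is exactly the induction hypothesis at layer $\ell+1$. Iterating up to the last capsule layer yields $a_j^- = a_j^+$ for every output capsule $j$.

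Finally I would close the argument by recalling that in an EM-routing capsule network the predicted class is read off from the scalar activations $a_j$ of the final capsule layer (one capsule per class), not from the pose matrices $M_j$. Since these activations coincide for $M$ and $-M$, the network produces the same output, and hence the same classification, on an input and its negation, so it cannot distinguish them.

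I expect the only real subtlety, and it is a mild one, to be bookkeeping in the induction: one must check that the premise the lemma still needs at layer $\ell$ is exactly what the previous layer hands forward, in particular that the primary-capsule activations are genuinely run-independent so that both the base case and every step are legitimate. Everything else is a direct transcription of the RBA proof, with the norm-of-output-vector decision rule replaced by the activation-based decision rule of EM-routing.
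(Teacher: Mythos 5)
Your proposal is correct and follows essentially the same route as the paper: the paper's proof is exactly a recursive (layer-by-layer) application of the preceding lemma, propagating ``same activations, negated poses'' up to the output layer, and then observing that classification depends only on the activations $a_j$. Your version merely makes the induction and its bookkeeping explicit, which is a faithful elaboration rather than a different argument.
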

\begin{proof}
  By recursively applying the preceding lemma at every layer we see that the negated input pose matrix $-M$ and the input activation $a$ produces a negated output pose matrix, but the same output activation for an input and its negation.
\end{proof}

\section{Adding a bias to RBA and EM-Routing}\label{sec:methods}
In this section, we propose two simple ways to avoid the weaknesses of the routing algorithms proven in the previous sections. To accomplish this, we add a bias term to the preactivations of the RBA algorithm and we add a bias to the pose matrix of EM-routing. 

\subsection{Routing-by-Agreement}
To solve this limitation we follow the original implementation of \citet{dynamic-routing} as mentioned in \Cref{sec:introduction}. This modification targets \cref{lm:rba_negative_activation} so that negative activation vectors cannot preserve their norm. A different norm ensures that the classification is also different. If we add biases to the calculation of the preactivation as follows  

\begin{align*}
  s_j = \left(\sum_i c_{ij} \hat{u}^-_{j|i} \right) + bias_j
\end{align*}

Then, a network can learn a $bias_j \neq 0$ such that:

\begin{align*}
  s^-_j &= \left(\sum_i c^-_{ij} \hat{u}^-_{j|i} \right) + bias_j \\
        &= \left( \sum_i c^-_{ij} (-\hat{u}^+_{j|i}) \right) + bias_j
        = - \left(\sum_i c^+_{ij} \hat{u}^+_{j|i} \right) + bias_j \\
        &\neq - \left(\left(\sum_i c^+_{ij} \hat{u}^+_{j|i} \right) + bias_j\right) = -s^+_j \\
\end{align*}
This enables the network to learn non zero bias parameters such that the calculated preactivation vector is different for inputs and their negative inputs. This leads to the activation vectors and the norm of the activation vectors being different. Therefore, the network can learn a bias so that inputs and negated inputs can be distinguished.

\subsection{EM-Routing}
For EM-routing we proceed similarly, but the target is \cref{lm:em_prediction} to let the network learn through a bias to avoid that inputs and their negations cannot be distinguished. We update the calculation of the votes with:
\begin{align*}
  V_{ij} = M_{ij} W_{ij} + \bias_j
\end{align*}

A network can learn a $bias_j \neq 0$ such that:
\begin{align*}
  V_{ij}^+ &= M_{ij}^+ W_{ij} + \bias_j = -(M_{ij}^- W_{ij}) + \bias_j \\ 
  &\neq -\left((M_{ij}^- W_{ij}) + \bias_j\right) = V_{ij}^-
\end{align*}
and therefore \cref{lm:em_a} does not hold so that the activation for inputs and their negations are different. It is important to mention that this change ensures that only the votes of lower-level capsules are changed and not the activations $a_i$.

\section{Experimental evaluation}\label{sec:experiments}

\subsection{Setup}
Our implementation of the capsule network uses TensorFlow 2.3 from \citet{tensorflow} and is available on GitHub\footnote{\url{https://github.com/peerdavid/capsnet-limitations}}. To be able to compare both routing algorithms we use RBA as well as EM-routing following the CapsNet architecture from \citet{dynamic-routing}. To avoid numerical issues of EM-routing during training we use the adapted version of EM-routing as introduced by \citet{em-routing-pitfalls}. The margin loss from \citet{dynamic-routing} is minimized using the Adam optimizer from \citet{adam} with a learning rate of $10^{-4}$ and a batch size of $128$. We generated a new dataset as described in experiment 1 and used MNIST from \citet{mnist}, fashionMNIST from \citet{fashion-mnist}, SVHN from \citet{svhn} and smallNORB from \citet{smallnorb} for experiment 2. The test accuracy is evaluated using the test set provided by each dataset.

\subsection{Classifying the $sign$ of a scalar variable}
\begin{figure}[t]
  \centering
  \includegraphics[width=4.0cm]{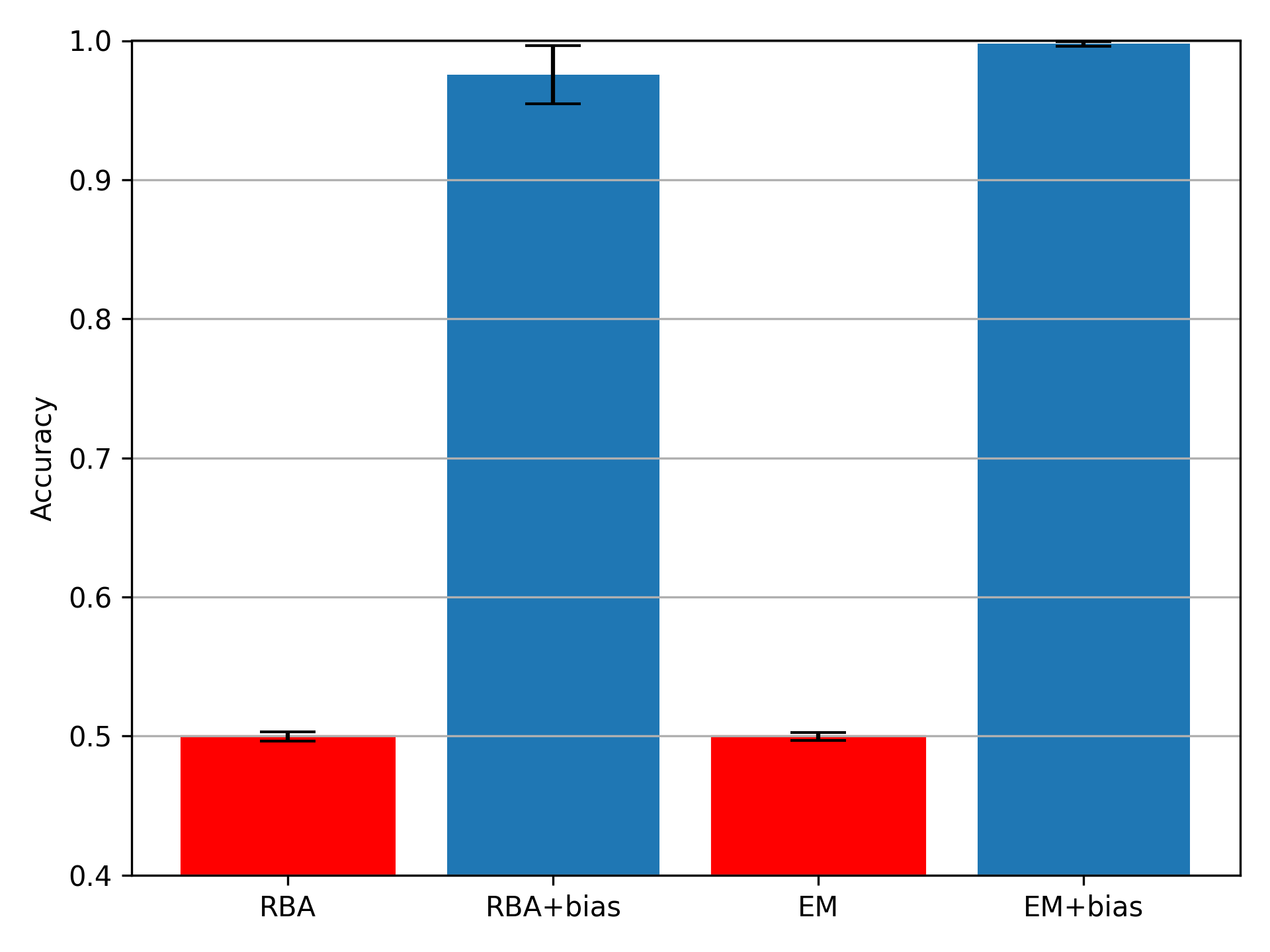}
  \caption{Mean training accuracy and the std. for $1200$ different capsule networks trained using RBA and EM-routing with and without a bias term.}\label{fig:sign_results}
\end{figure}

In \Cref{sec:proof} we showed that a capsule network can not distinguish inputs and its negative counterpart which we will evaluated empirically in this experiment. To this end, we generate a dataset with $20$k samples, where each input and its negative counterpart represent different classes: $x_i \in [-1, +1]$ where $x_i$ of class $1$ iff $x_i < 0$ and class $2$ otherwise. To evaluate the correctness of the proof from \Cref{sec:proof} empirically, we measure if the training accuracy is not better than chance for both, RBA and EM routing. On the other hand, if this limitation is avoided by using the method introduced in \Cref{sec:methods}, the problem should be easily solvable. As stated, the proof is independent with respect to the used architecture. To demonstrate this, we evaluate many different architectures in this experiment: The inputs are fed into a network with $1$ up to $4$ capsule layers. For each layer, we vary the number of capsules between $10$ and $30$ in steps of $5$ and we also vary the dimensionality of each capsule between $10$ and $18$ in steps of two. Each network is trained $3$ times with different random initialized weights for RBA, RBA+bias, EM-routing, and EM-routing + bias resulting in a total of $1200$ trained networks.

The mean accuracy together with the std. grouped by routing algorithms are reported in \cref{fig:sign_results}. 
\emph{Capsule networks without a bias term are not able to achieve an accuracy higher than chance whereas models that use a bias term can solve this task with high accuracy}, which supports our hypothesis. We would also like to mention that not only the mean accuracy is low for capsule networks without a bias term. We evaluated each model and none were able to achieve an accuracy above chance. This demonstrates the importance of a bias term and supports the correctness of the proof from \Cref{sec:proof} empirically. We also hypothesized in \Cref{sec:introduction} that this limitation affects the training of deep capsule networks negatively which we will evaluate in the next experiment.

\begin{table*}[t]
  \caption{Mean and std. test accuracy (3 runs) for a capsule network with RBA for $n$ layers.}
  \label{tbl:deep_rba}
  \centering
  \begin{small}
  \begin{sc}
  \begin{tabular}{ll|ccccc}
    \hline\noalign{\smallskip}
    Dataset  & Method     & $2$   & $3$   & $4$   & $5$ & $6$ \\
    \noalign{\smallskip} \hline \noalign{\smallskip}
    mnist   & rba       & \boldmath$99.4 \pm 0.03$ & \boldmath$99.5 \pm 0.03$ & $99.4 \pm 0.02$ & $9.8 \pm 0.00$ & $9.8 \pm 0.00$ \\
            & \bc rba+bias  & \boldmath$99.4 \pm 0.01$ & \boldmath$99.5 \pm 0.02$ & \boldmath$99.5 \pm 0.04$ & \boldmath$99.3 \pm 0.06$ & \boldmath$99.0 \pm 0.11$ \\
    fashion & rba           & $87.4 \pm 0.20$ & $89.0 \pm 0.06$ & \boldmath$89.0 \pm 0.16$ & $10.0 \pm 0.00$ & $10.0 \pm 0.00$  \\
            & \bc rba+bias  & \boldmath$87.5 \pm 0.26$ & \boldmath$89.1 \pm 0.08$ & \boldmath$89.0 \pm 0.08$ & \boldmath$87.9 \pm 0.19$ & \boldmath$85.3 \pm 1.71$ \\
    svhn    & rba           & \boldmath$91.9 \pm 0.10$ & $92.9 \pm 0.16$ & $92.3 \pm 0.05$ & $6.7 \pm 0.00$ & $6.7 \pm 0.00$ \\
            & \bc rba+bias  & \boldmath$91.9 \pm 0.15$ & \boldmath$93.0 \pm 0.14$ & \boldmath$92.5 \pm 0.24$ & \boldmath$91.4 \pm 0.07$ & \boldmath$87.1 \pm 0.79$ \\
    norb    & rba           & \boldmath$91.0 \pm 0.24$ & $89.4 \pm 0.48$ & $88.2 \pm 0.53$ & $20.0 \pm 0.00$ & $20.0 \pm 0.00$ \\
            & \bc rba+bias  & \boldmath$91.0 \pm 0.11$ & \boldmath$89.7 \pm 0.19$ & \boldmath$88.8 \pm 0.39$ & \boldmath$87.1 \pm 0.32$ & \boldmath$82.5 \pm 1.46$ \\
    \hline
  \end{tabular}
    \end{sc}
  \end{small}
\end{table*}

\begin{table*}[t]
  \caption{Mean and std. test accuracy (3 runs) for a capsule network with EM-routing for $n$ layers.}
  \label{tbl:deep_em}
  \centering
  \begin{small}
  \begin{sc}
  \begin{tabular}{ll|ccccc}
    \hline\noalign{\smallskip}
    Dataset & Method     & $7$   & $8$   & $9$   & $10$ & $11$   \\
    \noalign{\smallskip} \hline \noalign{\smallskip}
    mnist   & em           & $99.3 \pm 0.04$ & $99.2 \pm 0.06$ & $98.8 \pm 0.20$ & $35.3 \pm 27.62$ & $10.7 \pm 0.26$ \\
            & \bc em+bias  & \boldmath$99.4 \pm 0.03$ & \boldmath$99.3 \pm 0.06$ & \boldmath$99.2 \pm 0.07$ & \boldmath$99.1 \pm 0.04$ & \boldmath$99.0 \pm 0.04$ \\
    fashion & em           & $87.8 \pm 0.18$ & $87.1 \pm 0.39$ & $83.9 \pm 0.64$ & $37.7 \pm 21.54$ & $10.7 \pm 0.07$ \\
            & \bc em+bias  & \boldmath$88.3 \pm 0.05$ & \boldmath$87.9 \pm 0.26$ & \boldmath$87.8 \pm 0.23$ & \boldmath$87.4 \pm 0.26$ & \boldmath$87.0 \pm 0.10$ \\
    svhn    & em           & \boldmath$90.4 \pm 0.19$ & $85.3 \pm 1.83$ & $50.1 \pm 11.68$ & $24.0 \pm 6.23$ & $19.6 \pm 0.01$ \\
            & \bc em+bias  & $89.5 \pm 0.26$ & \boldmath$89.2 \pm 0.34$ & \boldmath$88.5 \pm 0.26$ & \boldmath$88.3 \pm 0.14$ & \boldmath$87.7 \pm 0.40$ \\
    norb    & em        & \boldmath$88.0 \pm 0.34$ & $74.8 \pm 2.03$ & $20.8 \pm 0.18$ & $20.5 \pm 0.11$ & $20.5 \pm 0.04$ \\   
            & \bc em+bias  & $87.6 \pm 0.73$ & \boldmath$87.8 \pm 0.85$ & \boldmath$87.0 \pm 0.65$ & \boldmath$86.2 \pm 0.48$ & \boldmath$86.0 \pm 0.28$ \\
    \hline
  \end{tabular}
  \end{sc}
  \end{small}
\end{table*}

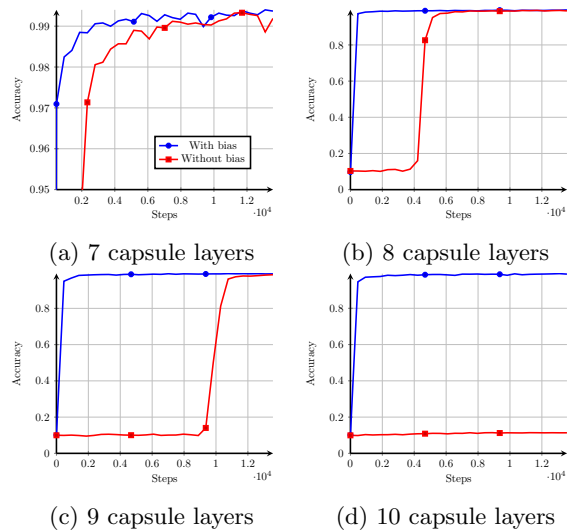
\begin{figure}[htbp]
  \centering
  \begin{subfigure}[b]{0.23\textwidth}
    \begin{tikzpicture}[scale=0.42]
      \begin{axis}[xmajorgrids, ymajorgrids, axis lines=left, mark repeat=10, line width=0.5mm, ymin=0.95, legend style={at={(0.90, 0.3)}}, ylabel={Accuracy}, xlabel={Steps}]
      \addplot table [x=Step, y=Value, col sep=comma] {data/bias_7.csv};
      \addlegendentry{With bias};
    
      \addplot table [x=Step, y=Value, col sep=comma] {data/no_bias_7.csv};
      \addlegendentry{Without bias};
      \end{axis}
    \end{tikzpicture}
    \caption{$7$ capsule layers}
    \label{fig:em_curve_7}
  \end{subfigure}
  \begin{subfigure}[b]{0.23\textwidth}
    \begin{tikzpicture}[scale=0.42]
      \begin{axis}[xmajorgrids, ymajorgrids, axis lines=left, mark repeat=10, line width=0.5mm, ymin=0.0, legend style={at={(0.90, 0.3)}}, ylabel={Accuracy}, xlabel={Steps}]
      \addplot table [x=Step, y=Value, col sep=comma] {data/bias_8.csv};
      \addplot table [x=Step, y=Value, col sep=comma] {data/no_bias_8.csv};
      \end{axis}
    \end{tikzpicture}
    \caption{$8$ capsule layers}
    \label{fig:em_curve_8}
  \end{subfigure}
  \begin{subfigure}[b]{0.23\textwidth}
    \begin{tikzpicture}[scale=0.42]
      \begin{axis}[xmajorgrids, ymajorgrids, axis lines=left, mark repeat=10, line width=0.5mm, ymin=0.0, legend style={at={(0.90, 0.3)}}, ylabel={Accuracy}, xlabel={Steps}]
      \addplot table [x=Step, y=Value, col sep=comma] {data/bias_9.csv};
      \addplot table [x=Step, y=Value, col sep=comma] {data/no_bias_9.csv};
      \end{axis}
    \end{tikzpicture}
    \caption{$9$ capsule layers}
    \label{fig:em_curve_9}
  \end{subfigure}
  \begin{subfigure}[b]{0.23\textwidth}
    \begin{tikzpicture}[scale=0.42]
      \begin{axis}[xmajorgrids, ymajorgrids, axis lines=left, mark repeat=10, line width=0.5mm, ymin=0.0, legend style={at={(0.90, 0.3)}}, ylabel={Accuracy}, xlabel={Steps}]
      \addplot table [x=Step, y=Value, col sep=comma] {data/bias_10.csv};
      \addplot table [x=Step, y=Value, col sep=comma] {data/no_bias_10.csv};
      \end{axis}
    \end{tikzpicture}
    \caption{$10$ capsule layers}
    \label{fig:em_curve_10}
  \end{subfigure}

  \caption{Comparison of the training curve for a capsule networks that uses EM-routing with and without a bias term.}
  \label{fig:em_curve}
\end{figure}

\subsection{Training deep capsule networks}
To evaluate whether, as hypothesized, the discussed limitation affects the training of deep capsule negatively, we train the architecture from \cite{dynamic-routing} and vary the number of hidden capsule layers. We use $64$ primary capsules of dimension $8$, for each hidden layer $32$ capsules with dimension $12$, and for the output capsule layer a dimension of $16$. The mean and standard deviation of the test accuracy for $3$ runs with and without a bias term and for different depths are reported in tables \ref{tbl:deep_rba} and \ref{tbl:deep_em}.

First of all, it can be observed that \emph{a bias term enables the training of deeper networks} because the accuracy for RBA without a bias term drops already after $4$ layers and for EM without a bias term after $9$ layers. For EM routing we can additionally see that the accuracy drops for the more complex datasets if no bias term is used because the accuracy on smallNORB and SVHN drops already after $9$ layers. This supports our hypothesis from \Cref{sec:introduction} that the limitation we found will negatively affect the training of deep capsule networks. We can also see that in almost all cases \emph{the test accuracy is equal or higher if a bias term is used} for both, RBA and EM-routing. It is expected that the accuracy will not worsened when using bias terms since the optimizer can simply ignore the bias terms by setting all biases to zero.

To further evaluate the effect of a bias term, we compare training curves for EM-routing with and without a bias, trained on MNIST, in \cref{fig:em_curve}. It can be seen that the capsule network utilizing a bias immediately starts improving, independent of the depth of the network. The same network without a bias needs at least $1$k steps to start lowering its loss (\cref{fig:em_curve_7}). The improvement of the loss gets delayed further as the depth of the network increases (\cref{fig:em_curve_8}, \cref{fig:em_curve_9}). Therefore, it can be concluded that \emph{the training of capsule networks converges faster if a bias is used}. We also want to mention that the method we provide only needs a small number of additional parameters. For example, a bias term for a network with 5 layers increases the number of parameters by only $0.07 \permil$.

\section{Conclusion}\label{sec:conclusion}
Although routing algorithms already proved successful in connecting capsules of different layers, as demonstrated by \citet{dynamic-routing} and \citet{em-routing}, we found a method to further improve both algorithms by increasing the expressivity of capsule networks. We theoretically proved the existence of a limitation and empirically showed that this negatively influences the training of deep capsule networks. Finally, we introduced a solution to this problem that (1) removes the aforementioned weakness, (2) enables the training of deep capsule networks, and (3) leads to a faster convergence of capsule networks while adding a negligible amount of parameters.

\section*{Acknowledgments}
This research has received funding from the Horizon
2020 research and innovation programme under grant agreement no. 731761, IMAGINE.

\bibliographystyle{model2-names}
\bibliography{refs}

\end{document}